\newcommand\Beta{\ensuremath{\mathcal{B}eta}}
\newcommand{\Normal}{\ensuremath{\mathcal{N}}}
\newcommand{\SPN}{\mathcal{S}}
\newcommand{\SPT}{\mathcal{T}}
\newcommand{\graph}{\mathcal{G}}
\newcommand{\ProductNode}{\mathsf{P}}
\newcommand{\SumNode}{\mathsf{S}}
\newcommand{\Leaf}{\mathsf{L}}
\newcommand{\Leaves}{\bm{\mathsf{L}}}
\newcommand{\Node}{\mathsf{N}}
\newcommand{\Nodes}{\bm{\mathsf{N}}}
\newcommand{\Child}{\mathsf{C}}
\newcommand{\ch}{\ensuremath{\mathbf{ch}}}
\newcommand{\scope}{\ensuremath{\psi}}
\newcommand{\F}{\mathbf{F}}
\newcommand{\X}{\mathbf{X}}
\newcommand{\data}{\mathcal{D}}
\newcommand{\xnew}{\mathbf{x}^{*}}
\newcommand{\x}{\mathbf{x}}
\newcommand{\y}{\mathbf{y}}
\newcommand{\w}{w}
\newcommand{\ws}{\bm{w}}
\newcommand{\cbar}{\,|\,}
\newcommand{\lse}{\mathrm{L}\overset{K}{\underset{i=1}{\mathrm{\Sigma}}}\mathrm{E}}
\newtheorem{theorem}{Theorem}
\newtheorem{definition}{Definition}
\pgfplotsset{compat=1.5}
\newcommand{\redub}{}
\def\redub#1{%
  \@ifnextchar_%
    {\@redub{#1}}
    {\@latex@warning{Missing argument for \string\redub}\@redub{#1}_{}}%
}
\def\@redub#1_#2{%
    \colorlet{currentcolor}{.}%
    \color{red}%
    \underbrace{\color{currentcolor}#1}_{\color{red}#2}%
    \color{currentcolor}%
}
\begin{document}

\twocolumn[
\aistatstitle{Deep Structured Mixtures of Gaussian Processes}
\aistatsauthor{Martin Trapp \And Robert Peharz \And  Franz Pernkopf \And Carl Edward Rasmussen }
\aistatsaddress{TU Graz \& OFAI \\ SPSC Lab \And TU Eindhoven \\ Information Systems WSK\&I \And TU Graz \\ SPSC Lab \And University of Cambridge \\ CBL Lab } ]

\begin{abstract}
Gaussian Processes (GPs) are powerful non-parametric Bayesian regression models that allow exact posterior inference, but exhibit high computational and memory costs.
In order to improve scalability of GPs, approximate posterior inference is frequently employed, where a prominent class of approximation techniques is based on local GP experts.
However, local-expert techniques proposed so far are either not well-principled, come with limited approximation guarantees, or lead to intractable models.
In this paper, we introduce deep structured mixtures of GP experts, a stochastic process model which i) allows \emph{exact} posterior inference, ii) has attractive computational and memory costs, and iii) when used as GP approximation, captures predictive uncertainties consistently better than previous expert-based approximations.
In a variety of experiments, we show that deep structured mixtures have a low approximation error and often perform competitive or outperform prior work.
\end{abstract}

\section{INTRODUCTION}

Gaussian Processes (GPs) are powerful and versatile models for probabilistic non-linear regression that can capture complex non-linear relationships in data.
GPs allow for exact inference, that is, computing the posterior mean and covariance of a GP given $N$ observations with $D$ dimensions.
However, the computational and memory costs scale as $\mathcal{O}(N^3)$ and $\mathcal{O}(N^2+ND)$~\cite{Rasmussen2006}, respectively, which limits their use to small data domains or require approximation schemes for big datasets.
The most common approaches to overcome these limitations are variational approximations to the GP posterior and methods based on local GP experts \cite{Liu2018}.

The first approach is undoubtedly the more dominant one as it allows for straightforward implementation using differential programming~\cite{Wang2018}.
In this case, the posterior of a GP is represented with $Q$ \emph{inducing points} which are treated as variational parameters and learned by minimising the KL divergence between approximate and full posterior.
Variational approximations reduce the computational burden to $\mathcal{O}(NQ^2)$ \cite{Titsias2009}.
As shown by~\cite{Burt2019}, the number of \emph{inducing points} has to increase with $Q = \mathcal{O}(\log^D N)$ in order to guarantee convergence with high probability.
In the non-asymptotic regime, this may imply that inducing points struggle in producing a good sparse approximation.

Approximations based on local experts, on the other hand, use a \emph{divide-and-conquer} strategy and partition the
covariate space (or the data set) into subsets, each modelled with an individual GP expert.
For $K$ experts, each with $M << N$ observations, the computational and memory costs are typically reduced to $\mathcal{O}(KM^3)$ and $\mathcal{O}(K(M^2+M D))$, respectively.
Prominent examples include the Naive-Local-Experts model (NLE) \cite{Kim2005,Vasudevan2009}, which naively models each partition of the covariate space with an independent GP, Products-of-Experts (PoE)~\cite{Tresp2000,Cao2014}, which aggregate predictive distributions from experts using a product operation, and the Mixture-of-Experts (MoE)~\cite{Tresp2000a,Rasmussen2001}, which dynamically distribute observations to experts.

All these local-expert approaches have different advantages and disadvantages.
The NLE model allows exact posterior inference, which reduces to independent GP inference at each expert, but introduces hard discontinuities in the covariate space.
Thus, leading to high generalisation errors~\cite{Liu2018} if the partitioning is not well-supported by the data.
PoE approaches have been shown to result in sub-optimal rates of the posterior contraction~\cite{Szabo2019} and the combination of local experts using product aggregation is known to be Kolmogorov inconsistent~\cite{Samo2016}.
Even in the case of the Bayesian committee machine (BCM)~\cite{Tresp2000}, where the PoE approach is justified as approximation to Bayesian posterior inference, the introduced approximation error is hard to analyse.
Finally, while MoE models specify a sound stochastic process model, they do not permit tractable posterior inference and rely on approximate inference techniques.

In this paper, we introduce Deep Structured Mixtures of GPs (DSMGPs)\footnote{\url{https://tinyurl.com/dsmgp-jl}} as an attractive alternative to previous local-expert approaches.
Our model is based on a natural combination of Sum-Product Networks (SPNs) \cite{Darwiche2003,Poon2011} and GPs.
SPNs, in a nutshell, are a deep generalisation of classical mixture models, and recursively model a distribution using i) user-provided distributions (leaves), ii) factorisations (products), and iii) mixtures (sums), whose arrangement is captured by an acyclic directed graph.
See Section~\ref{sec:relatedwork} for details on SPNs.
A key advantage of SPNs is that -- akin to GPs -- many inference scenarios can be computed \emph{exactly}.

So far, SPNs have solely been used as density representations for finitely many random variables.
DSMGPs, introduced in this paper, can be understood as an extension of SPNs to the stochastic process case, by equipping SPNs with Gaussian measures (corresponding to GPs \cite{Rajput1972}) as leaves.
Equivalently, we can also interpret our model as an hierarchically structured mixture over a large number of NLEs.
In particular, the posterior of DSMGPs can be naturally understood as Bayesian model averaging over an \emph{exponentially large mixture} of NLEs, i.e.~combinatorial in the states of latent SPN variables \cite{Zhao2016,Peharz2017}.
The crucial key advantage of DSMGPs is that posterior inference can be computed \emph{exact} and \emph{efficiently}, i.e.~they inherit tractable inference from SPNs and GPs.

We further show that the structure of DSMGPs can be exploited to speed up computations, by sharing Cholesky decompositions among GP leaves, and to model non-stationary time-series, by locally adapting hyperparameters.
In a variety of experiments we show that our approach captures uncertainties consistently better than previous experts-based approximations, is competitive to state-of-the-art, and has competitive running times compared to state-of-the-art.

\section{RELATED WORK} \label{sec:relatedwork}

While our proposed DSMGP is a process model on its own right, our main motivation in this paper is to use it as an approximation to a full GP, following a \emph{divide-and-conquer} approach.
In this sense, the most related approaches are expert-based approaches, which we review in this section.

The probably simplest approach are Naive-Local-Experts \cite{Kim2005}, and subsequent approaches \cite{Gramacy2008,Vasudevan2009}.
NLEs use a pre-defined, sometimes nested, partition of the covariate space and model each subspace using an independent GP expert.
Due to the independence assumptions, NLEs introduce hard discontinuities in the modelled functions.
Recent approaches \cite{Par2016} try to ameliorate this effect by imposing continuity constraints onto the local experts using patched GPs.
However, this approach suffers from inconsistent variances and does not scale well with the number of boundaries and, consequently, the dimensionality of the covariate space.
In contrast to NLEs and patched GPs, our model does not rely on a single partition, but rather performs posterior inference over a large set of partitions, and thus effectively selects partitions which are well supported by the data.

Product-of-Expert (PoE) approaches, generalised PoE (gPoE) \cite{Cao2014}, the Bayesian Committee Machine (BCM) \cite{Tresp2000} and the robust Bayesian Committee Machine (rBCM) \cite{Deisenroth2015} distribute subsets of the data to local experts and aggregate their predictive distributions using a product operation -- weighted by some adaptive or non-adaptive scale factors.
The key motivation in these approaches is that a product of Gaussians is still Gaussian.
The major drawback of these methods is that they are somewhat heuristic, as PoEs typically do not correspond to inference in some well-defined statistical model.
BCMs justify PoEs as approximation to posterior inference in GPs, but the introduced approximation error is hard to analyse.
Moreover, the product aggregation of expert predictions is Kolmogorov inconsistent \cite{Samo2016}, and PoEs are known to have sub-optimal rates of the posterior contraction, and therefore uncalibrated predictive uncertainties \cite{Szabo2019}.
In contrast to PoE approaches, our model is a well-defined stochastic process and adequately captures predictive uncertainties.

The MoE model \cite{Tresp2000a} is a sound probabilistic model, defined as a mixture of GP experts and a so-called gating network which dynamically assigns data to GPs.
One of the most prominent variants is the infinite MoE model~\cite{Rasmussen2001}, which removes the i.i.d.~assumption of the MoE and uses a Dirichlet process as gating network.
Alternative formulations and improvements of the infinite MoE model can be found in \cite{Meeds2005,Gadd2019}.
However, while MoE models are designed to capture multi-modality and non-stationarity, they usually lack tractable inference.
Consequently, they inherently rely on approximate posterior inference, which hampers their application to large data domains.
In contrast to MoE models, our approach does not use a gating network, but performs inference over a large set of pre-determined partitions of the covariate space.
Crucially, and unlike as in MoE models, posterior inference in our model can be performed exactly and efficiently.
Note that the approach by \cite{Zhang2019}, which was published around the time of this paper, is similar in spirit but does not utilise exact posterior inference. 

\section{BACKGROUND}   \label{sec:background}

\subsection{Gaussian Process Regression}   \label{sec:GP}

A Gaussian Process (GP) is defined as a collection of random variables (RVs) $\F$ indexed by an arbitrary covariate space $\mathcal{X}$, where any finite subset of $\F$ is Gaussian distributed, and of which any two overlapping finite sets are marginally consistent \cite{Rasmussen2006}.
In that way, GPs can naturally be interpreted as distributions over functions $f\colon \mathcal{X} \rightarrow \mathbb{R}$.
A GP is uniquely specified by a \emph{mean-function} $m\colon \mathcal{X} \rightarrow \mathbb{R}$ and a \emph{covariance function} $k\colon \mathcal{X} \times \mathcal{X} \rightarrow \mathbb{R}$.
Given a training set of $N$ observations $\data = \{(\x_n, y_n)\}_{n=1}^N$ with $\X = \{\x_n\}_{n=1}^N$ and $\y = \{y_n\}_{n=1}^N$, let $k_{\X,\X}$ be the $N \times N$ covariance matrix defined by $[k_{\X,\X}]_{n,m} = k(\x_n, \x_m)$ and let $m_\X$ be the respective mean values, i.e., $[m_{\X}]_n = m(\x_n)$.

In GP regression, we aim to model noisy observed output $y_n \in \mathbb{R}$ given input locations $\x_n \in \mathcal{X}$, i.e.,
\begin{align}
 f &\sim \textrm{GP}(m_{\X}, k_{\X,\X}) \, , \\
 y_n \cbar f, \x_n &\overset{\textrm{iid}}{\sim} \Normal(f(\x_n), \sigma^2) \, ,
\end{align}
where $\sigma^2$ is the noise variance.
The posterior of a GP conditioned on $\data$ can be obtained by computing the posterior mean, $m_\data(\x^*) = k_{\x^*,\X} [k_{\X,\X} + \sigma^2 \bm{I}]^{-1} \y$, and the posterior variance, $V_{\data}(\x^*) = k_{\x^*, \x^*} - k_{\x^*,\X}[k_{\X,\X} + \sigma^2 \bm{I}]^{-1}k_{\X,\x^*}$.
The main challenge is the inversion of $[k_{\X,\X} + \sigma^2 \bm{I}]$, which is frequently realised via the Cholesky decomposition \cite{Press2002}.

Note that there is an intimidate relationship between GPs, whose function draws are almost surely from a certain function space, and Gaussian measures defined on the same function space.
In particular, this relationship is one-to-one for the space of continuously differentiable functions on any real interval, and for $L_2$-spaces defined on arbitrary measurable spaces \cite{Rajput1972}.
We will take use of this equivalence, and describe our model as a hierarchical mixture, realised as a sum-product network, over Gaussian measures.

\subsection{Sum-Product Networks}   \label{sec:SPN}
Sum-Product Networks (SPNs) \cite{Darwiche2003,Poon2011} are a prominent type of tractable deep probabilistic model, which allow fast and exact inference in high-dimensional data domains.

\begin{definition}[Sum-Product Network]   \label{def:SPN}
A sum-product network over a finite set of RVs $\F = \{F_1, \dots, F_D\}$ is a 4-tuple $\SPN = (\graph, \scope, \ws, \theta)$, where $\graph$ is a \emph{computational graph}, $\scope$ is a \emph{scope-function}, $\ws$ denotes a set of sum-weights, and $\theta$ is a set of leaf parameters.

The \emph{computational graph} $\graph$ is a connected acyclic directed graph, containing three types of nodes: sums $(\SumNode)$, products $(\ProductNode)$ and leaves $(\Leaf)$ (nodes without children).
We use $\Node$ to denote a generic node, and $\Nodes$ is the set of all SPN nodes.
The set of children of node $\Node$ is denoted as $\ch(\Node)$.

The \emph{scope function} is a function $\scope \colon \Nodes \mapsto 2^\F$, assigning each node in $\graph$ a subset of $\F$, where $2^\F$ denotes the power set of $\F$.
It has the following properties:
i) If $\Node$ is the root node, then $\scope(\Node) = \F$;
ii) If $\Node$ is a sum or product, then $\scope(\Node) = \bigcup_{\Node' \in \ch(\Node)} \scope(\Node')$;
iii) For each sum node $\SumNode$ we have $\forall \Node, \Node' \in \ch(\SumNode)\colon \scope(\Node) = \scope(\Node')$ \emph{(completeness)};
iv) For each product node $\ProductNode$ we have $\forall \Node, \Node' \in \ch(\ProductNode)\colon \scope(\Node) \cap \scope(\Node') = \emptyset$ \emph{(decomposability)}.
\end{definition}

In an SPN, each node $\Node$ in $\graph$ represents a distribution over RVs $\scope(\Node)$.
In particular, each $\Leaf$ computes a distribution over its scope parameterised by $\theta_\Leaf$.
A sum node $\SumNode$ computes a weighted sum $\SumNode = \sum_{\Node \in \ch(\SumNode)} \w_{\SumNode,\Node} \, \Node$ where $\w_{\SumNode,\Node} \geq 0$.
Note that w.l.o.g.~we assume that all sum nodes are normalised, i.e., $\sum_{\Node \in \ch(\SumNode)} \w_{\SumNode,\Node} = 1$ \cite{Peharz2015,Zhao2015}.
Finally, a product node $\ProductNode$ computes a factorisation over its children, i.e.~$\ProductNode = \prod_{\Node \in \ch(\ProductNode)} \Node$.
It can be shown, that the conditions \emph{completeness} and \emph{decomposability} guarantee that many inference scenarios, e.g.~marginalisation, can be performed in linear time of the network size \cite{Darwiche2003,Poon2011,Peharz2015}.

As shown in \cite{Zhao2015,Zhao2016}, SPNs can be interpreted as deep structured mixture models, using the notion of \emph{induced trees}.
\begin{definition}[\cite{Zhao2016}] \label{def:induced_tree}
Given an SPN graph $\graph$, a sub-graph $\SPT = (\SPT_V , \SPT_E)$ of $\graph$ is called an \emph{induced tree} if
i) the root of $\graph$ is in $\SPT$;
ii) if $\Node \in \SPT_V$ is a sum node, then exactly one child of $\Node$ in $\SPN$ is in $\SPT_V$, and the corresponding edge is in $\SPT_E$;
iii)
if $\Node \in \SPT_V$ is a product node, then all the children of $\Node$ in $\SPN$ are in $\SPT_V$, and the corresponding edges are in $\SPT_E$.
\end{definition}
Using the notion of induced trees, it can be shown that the distribution of an SPN, denoted as $\SPN(\x)$, can be expressed as a mixture whose components correspond to induced trees~\cite{Zhao2016}, i.e.,
\begin{equation} \label{eq:zhao}
 \SPN(\x) = \sum_{i=1}^K \redub{\prod_{(\SumNode,\Node) \in \SPT_{i,E}} w_{\SumNode,\Node}}_{=p(\SPT_i)} \prod_{\Leaf \in \SPT_{i,V}} p(\x \cbar \theta_\Leaf) \, ,
\end{equation}
where $K$ denotes the (exponentially large) number of induced trees.

To the best of our knowledge, SPNs have been previously defined only over finitely many RVs.
In the next section, we extend SPNs to stochastic process models, i.e.~extending SPNs to infinitely many RVs, by equipping them with GP leaves.

\section{DEEP STRUCTURED MIXTURE OF GAUSSIAN PROCESSES}   \label{sec:deepMixtureGP}

Intuitively, a Deep Structured Mixture of GPs (DSMGPs) can be though of as an ``SPN over GPs.''
Formally, this is most naturally defined via the one-to-one correspondence of Gaussian measures on a function space of interest and GPs which almost surely realise in this function space \cite{Rajput1972}.

\begin{definition}[Deep Structured Mixture of GPs] \label{def:spngp}
Given a measurable covariate space $(\mathcal{X}, \Sigma)$, let $(\mathcal{F}, \Sigma_\mathcal{F})$ be a measurable function space of real-value functions defined on $\mathcal{X}$, i.e., $\mathcal{F} \subset \mathbb{R}^\mathcal{X}$ equipped with a suitable sigma algebra $\Sigma_\mathcal{F}$.
Then a Deep Structured Mixture of GPs (DSMGP) is defined as an SPN $(\graph, \scope, \ws, \theta)$, where
$\graph$ is a computational graph (as in Definition~\ref{def:SPN}), $\scope$ is a scope function $\scope \colon \Nodes \mapsto \Sigma$, $\ws$ is a set of sum weights, and $\theta$ is a set of GP parameters.
When $\Node$ is the root of $\graph$, then $\scope(\Node) = \mathcal{X}$; additionally, $\scope$ satisfies the conditions ii-iv) in Definition~\ref{def:SPN}.
Furthermore:
\begin{enumerate}
 \item A leaf $\Leaf \in \graph$ computes a Gaussian measure, corresponding to a GP on $\scope(\Leaf)$, parametrised by $\theta_\Leaf$.
 \item A product node $\ProductNode \in \graph$ computes a product measure of its children.
 \item A sum nodes $\SumNode \in \graph$ computes a convex combination (determined by its sum-weights) of the measures computed by its children.
 \end{enumerate}
\end{definition}

Definition~\ref{def:spngp} is mathematically elegant as it replaces the usual definition of an SPN leaf -- involving densities over finitely many RVs -- to Gaussian measures, corresponding to GPs.
On the other hand, this definition might obscure how to work with DSMGPs in practice.
Therefore, recall that a Gaussian measure evaluated (projected onto) on finitely many data points yields a multivariate Gaussian and similarly a NLE yields a multivariate Gaussian with block-diagonal covariance-structure.
Consequently, a DSMGP evaluated on finitely many data points yields a finite -- albeit large -- mixture of Gaussians with block-diagonal covariance-structure, for which covariance-structure is determined by the scope function $\scope$.
Therefore, our model yields a ``normal'' SPN with Gaussian leaves, when evaluated on finitely many data points.

\begin{figure}[t!]
\includegraphics[width=\linewidth]{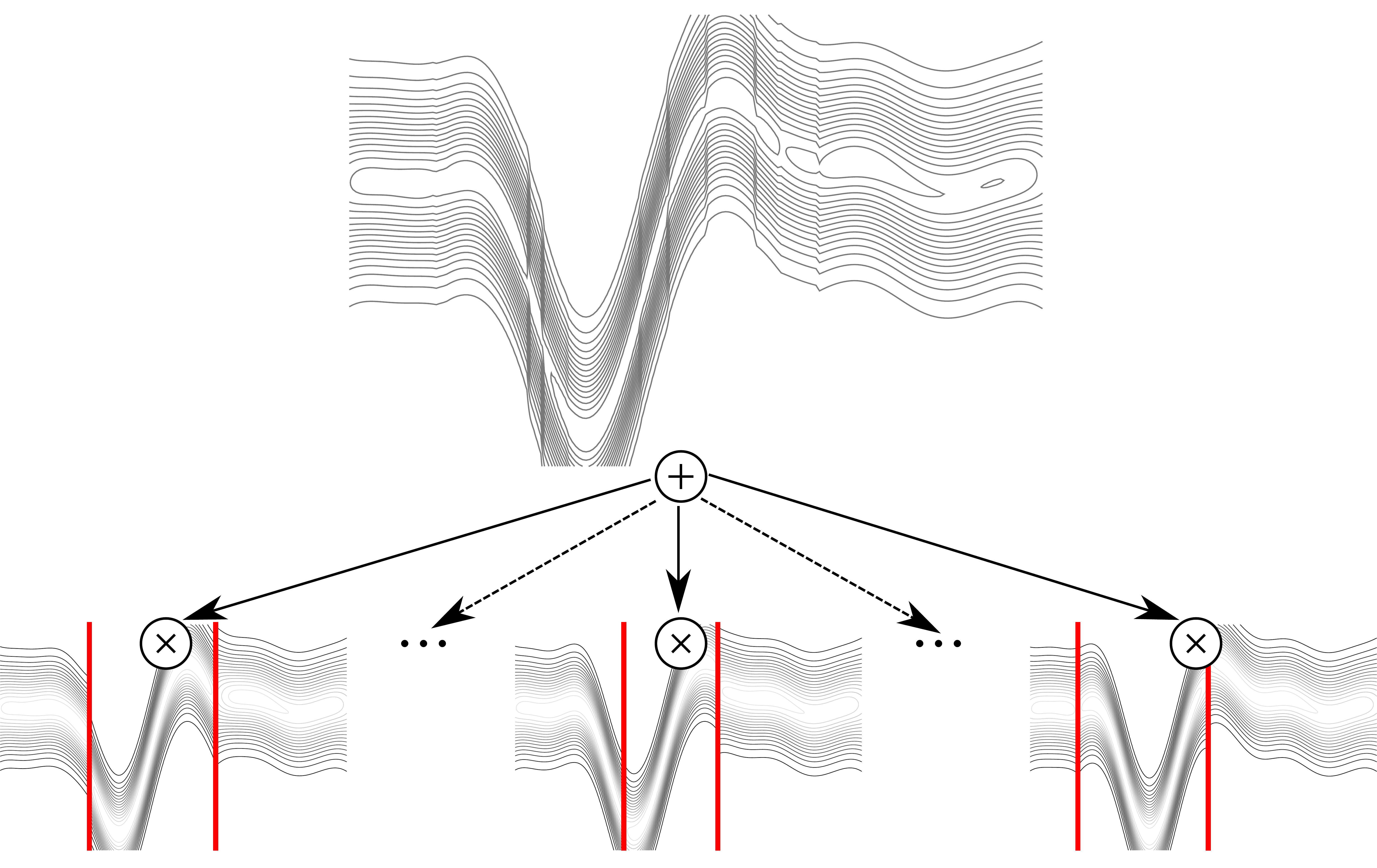}
\caption{Illustration of a DSMGP (depth 1). Vertical lines (red) represent hypotheses of split-points in the input space, i.e., independence assumptions. \label{fig:modelIllustration}}
\end{figure}

The structure $(\graph, \scope)$ of a DSMGP is either pre-defined or learned using posterior inference~\cite{Trapp2O19}.
For simplicity, we assume that $\graph$ is tree-shaped (i.e.~each node has at most one parent), and pre-specify $\psi$ by fixing a random partition of the covariate space at each product node.
An algorithm to construct a DSMGPs is described in detail in the supplement.
When using DSMGPs as a prior over functions, we assume all sum node weights to be uniform, i.e.~$w_{\SumNode,\Node} = 1/K_\SumNode$ where $K_\SumNode$ is the number of children under $\SumNode$.
Note that in the course of exact posterior inference, these weights will be automatically updated.
Intuitively, each sum node represents a prior over hypotheses of split-points in the input space, were a split-point marks statistical independence.
Split-points are selected in a hierarchical manner, following the same hierarchy as sum nodes in the DSMGP.
This mechanism is illustrated in Figure~\ref{fig:modelIllustration}.
Therefore, a DSMGP is particularly well suited when it can be expected that certain regions of the input space are approximately independent.
The respective split-points will be automatically inferred (among a rich set of choices) through exact posterior inference.

Because DSMGPs naturally have overlapping local GPs, leaves share parts of their kernel matrix.
Thus, making it possible to speed up computations of the Cholesky decompositions.
We refer to Section~\ref{sec:sharing} in the supplement for a detail discussion on sharing Cholesky decompositions in DSMGPs.

\subsection{Exact Posterior Inference} \label{sec:posteriorinference}
Posterior inference in DSMGPs combines exact inference in GP experts, defined over a subspace of $\mathcal{X}$, with tractable computations in SPNs.
This is a crucial advantage over PoE approaches, which do not define a sound probabilistic model, and over MoE approaches, which are inherently intractable.

\begin{theorem}
    Let $\SPN = (\graph, \scope, \w, \theta)$ be a DSMGP on the measurable space $(\mathcal{X}, \Sigma)$, with $\mathcal{X}$ being a covariate space and $\Sigma$ a $\sigma$-algebra over $\mathcal{X}$.
    Then, computing the unnormalised posterior distribution of $\SPN$ simplifies to tractable posterior inference at the leaves.
\end{theorem}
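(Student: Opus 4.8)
The plan is to proceed by structural induction over the (tree-shaped) computational graph $\graph$, establishing that Bayes' rule commutes with the two internal DSMGP operations -- convex combinations at sum nodes and product measures at product nodes -- so that posterior inference collapses into a single bottom-up pass whose only nontrivial computations are GP posteriors at the leaves.

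First I would reduce the infinite-dimensional problem to a finite one. Since the likelihood of the observations $\data=\{(\x_n,y_n)\}_{n=1}^N$ depends on a function draw $f$ only through its values at the finite set $\X$, and since a Gaussian measure projected onto finitely many points is a multivariate Gaussian via the GP--Gaussian-measure correspondence of \cite{Rajput1972}, the DSMGP evaluated on $\X$ is the finite mixture of block-diagonal Gaussians noted after Definition~\ref{def:spngp}. I can therefore write the unnormalised posterior as the prior measure times the finite-dimensional likelihood $p(\y\cbar f)=\prod_n \Normal(y_n; f(\x_n),\sigma^2)$ and work entirely with this projection.

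Next I would verify the base case and the two inductive steps, tracking both the posterior measure and the evidence $Z$ at each node. At a leaf $\Leaf$, the prior is a GP on $\scope(\Leaf)$; multiplying by the likelihood of the data in $\scope(\Leaf)$ is exactly standard GP posterior inference, yielding a Gaussian posterior together with its marginal likelihood $Z_\Leaf$, both tractable. At a product node $\ProductNode$, decomposability guarantees the children's scopes are disjoint and, with property ii) of the scope function, partition $\scope(\ProductNode)$; hence each $\x_n\in\scope(\ProductNode)$ lies in exactly one child's scope, the likelihood factorises across children, and -- because $\ProductNode$ computes a product measure -- the unnormalised posterior is the product of the children's unnormalised posteriors with $Z_\ProductNode=\prod_{\Child\in\ch(\ProductNode)} Z_\Child$. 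At a sum node $\SumNode$, the prior is a convex combination $\sum_\Child w_{\SumNode,\Child}\,\Child$; since the same likelihood functional is applied to every component, multiplying through leaves a convex combination whose weights are reweighted by the children's evidences, $w_{\SumNode,\Child}\mapsto w_{\SumNode,\Child}\,Z_\Child$, with $Z_\SumNode=\sum_\Child w_{\SumNode,\Child}\,Z_\Child$. Composing these three facts along the tree -- equivalently, applying them termwise to the induced-tree expansion \eqref{eq:zhao} -- shows the full unnormalised posterior is obtained from the GP-leaf posteriors by the same sum and product operations, i.e.~it is itself a DSMGP whose leaves are GP posteriors and whose sum-weights are products of prior weights and leaf evidences.

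The main obstacle, requiring the most care, is the product step: I must confirm that conditioning on the finitely many observations genuinely commutes with the product-measure operation. This hinges on decomposability -- disjoint child scopes -- together with the fact that each observation's likelihood factor attaches to exactly one child, so that conditioning the product measure equals the product of the conditioned child measures and no cross-terms enter the evidence. A secondary, purely measure-theoretic subtlety is ensuring that conditioning an infinite-dimensional Gaussian measure on a finite-dimensional observation is well defined and coincides with the familiar finite-dimensional GP posterior, which again follows from the equivalence in \cite{Rajput1972}. Because only the \emph{unnormalised} posterior is required, the global normaliser $Z_{\mathrm{root}}$ (the total marginal likelihood) need not be computed, though it too falls out of the same bottom-up pass.
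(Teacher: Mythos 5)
Your proposal is correct and follows essentially the same route as the paper: a structural induction over the three node types, pulling the likelihood over sum nodes and splitting it across the disjoint child scopes at product nodes so that all nontrivial computation lands at the GP leaves. Your additional bookkeeping of per-node evidences $Z$ and the explicit finite-dimensional projection via the Gaussian-measure correspondence are welcome refinements, but they match what the paper does implicitly (the evidence propagation is exactly the re-normalisation step the paper describes immediately after the proof).
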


\begin{proof}
Under the usual iid.~assumption, given a training set $\data$ the unnormalised posterior is
\begin{align}
 p(\bm f \cbar \mathcal{D}) \propto \prod_{(\x_n,y_n) \in \data} \redub{p(y_n \cbar f_n)}_{\mathclap{\textrm{likelihood}}} \, \redub{p(f_n \cbar \x_n)}_{\mathclap{\textrm{prior}}}  \, .
 \label{eq:SPNGP_unnormalized_posterior}
\end{align}
If the DSMGP is a leaf $\Leaf$, i.e.~it is a Gaussian measure induced by the GP at $\Leaf$, then the computation of the posterior follows the standard computations~\cite[Eq.~2.7]{Rasmussen2006}.

In case the DSMGP is a sum node $\SumNode$, the likelihood terms can be ``pulled'' over the sum, i.e.,
\begin{eqnarray}
 \begin{aligned}
 p_{\SumNode}&(\bm f \cbar \data) \\
 & \propto {\color{red}\prod_{(\x_n,y_n) \in \data} p(y_n \cbar f_n)} \sum_{\Node \in \ch(\SumNode)} \w_{\SumNode,\Node} \, p_{\Node}(f_n \cbar \x_n) \\
 & = \sum_{\Node \in \ch(\SumNode)} \w_{\SumNode,\Node} \, {\color{red}\prod_{(\x_n,y_n) \in \data} p(y_n \cbar f_n)} \, p_{\Node}(f_n \cbar \x_n) \, ,
 \label{eq:SPNGP_unnormalized_posterior_sum}
 \end{aligned}
\end{eqnarray}
simplifying inference to inference at the children.

Finally, in case the DSMGP is a product node $\ProductNode$, we can swap the product over observations with the product over children and ``pull'' the likelihood terms down to the respective children, i.e.,
\begin{eqnarray}
 \begin{aligned}
 p_{\ProductNode}&(\bm f \cbar \data)  \propto {\color{red}\prod_{(\x_n,y_n) \in \data} p(y_n \cbar f_n) }\prod_{\Node \in \ch(\ProductNode)} \, p_{\Node}(f_n \cbar \x_n) \\
 & = \prod_{\Node \in \ch(\ProductNode)} \Bigg(  \redub{\prod_{(\x_n,y_n) \in \data_{(\Node)}}}_{\bigcup\limits_{\Node \in \ch(\ProductNode)} \data_{(\Node)} = \data}  {\color{red}p(y_n \cbar f_n)  }\, p_{\Node}(f_n \cbar \x_n) \Bigg) \, ,
\label{eq:SPNGP_unnormalized_posterior_prod}
 \end{aligned}
\end{eqnarray}
where $\data_{(\Node)}$ denotes the subset of observations node $\Node$ is responsible for and $\cap_{\Node \in \ch(\ProductNode)} \data_{(\Node)} = \emptyset$.
Therefore, posterior inference simplifies to inference at the children of the product node $\ProductNode$ using sub-sets of $\data$.

Inductively repeating this argument for all internal nodes, we see that we obtain the unnormalised posterior by multiplying each leaf with its local likelihood.
Therefore, the unnormalised posterior of a DSMGP is obtained by performing inference on the leaves, which can be done exactly \cite[Eq.~2.7]{Rasmussen2006}.
\end{proof}

Finally, we can obtain the normalised posterior, i.e.~$p(\bm f \cbar \mathcal{D}) = \frac{p(\y \cbar \bm f) \, p(\bm f \cbar \X)}{p(\y \cbar \X)}$, by re-normalising the unnormalised posterior of the DSMGP using a bottom-up propagation of the marginal likelihood of each expert.
In this paper we use \cite[Alg.~1]{Peharz2015}, which scales linear in the number of nodes, for this purpose, c.f.~Section~\ref{alg:inference} in the supplement for a pseudocode implementation.
Note that normalising the posterior can be understood as updating our belief over split-points, i.e., independence assumptions in $\mathcal{X}$.

\subsection{Predictions}
The predictive posterior distribution of a DSMGP for an unseen datum $\xnew$ is naturally a mixture distribution and, therefore, can be multimodal.
For practical reasons, it is, therefore, useful to project the posterior of a DSMGP to the closest GP, i.e., the GP with minimal KL divergence from the DSMGP.
This can be done by computing the first and second moments of the resulting mixture distribution, see~\cite[Eq.~A.24]{Rasmussen2006}.
Let $\Leaves$ be the set of all GP leaves in a DSMGP.
Then, given a function $\tau_i\colon \mathcal{X} \rightarrow \Leaves$ which maps an unseen datum at location $\xnew$ to a leaf $\Leaf$ for each induced tree $\SPT_k$, we can write the mean (first moment) as
\begin{equation}
  m_{\data}(\xnew) = \sum_{i=1}^K \prod_{(\SumNode,\Node) \in \SPT_{i,E}} w_{\SumNode,\Node} \; m_{\tau_i(\xnew)}(\xnew) \, ,
\end{equation}
and the variance (second moment) as
\begin{equation}
\begin{aligned}
  V_{\data}(\xnew) = \sum_{i=1}^K &\prod_{(\SumNode,\Node) \in \SPT_{i,E}} w_{\SumNode,\Node} ( m^2_{\tau_i(\xnew)}(\xnew) \\
  &+ V_{\tau_i(\xnew)}(\xnew) - m^2_{\data}(\xnew) ) \, ,
\end{aligned}
\end{equation}
where we use $m_{\tau_i(\xnew)}(\xnew)$ and $V_{\tau_i(\xnew)}(\xnew)$ as short-hand notation for the mean and variance of the predictive distribution of the GP allocated at leaf $\tau_i(\xnew)$.
Both moments can be computed efficiently in DSMGPs.

\subsection{Hyperparameter Optimisation}
\begin{figure}
  \centering
  \includegraphics[width=\linewidth]{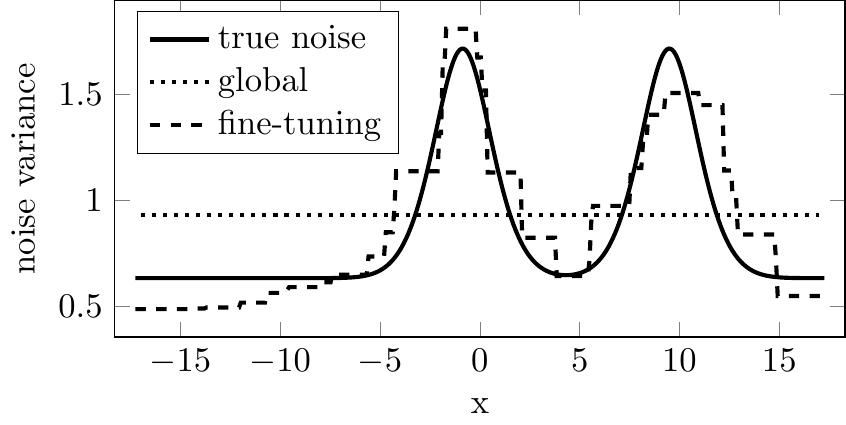}
  \caption{Noise parameter of DSMGP after global hyperparameter optimisation (global) and fine-tuning on a dataset with heteroscedastic noise. \label{fig:nonstationary}}
\end{figure}

We can optimise the hyperparameters, i.e.~noise variance and kernel parameters, of a DSMGP by maximising the \emph{log marginal likelihood} of the data $\data$.
Assuming a zero \emph{mean-function}, the log marginal likelihood of a GP at leaf $\Leaf$ is computed only for the observations that fall into the subspace $\mathcal{X}_{\Leaf}$.
Let $\data_{(\Leaf)} = \{(\x_n, y_n) \in \data \cbar x_n \in \mathcal{X}_{\Leaf}\}$ denote the respective observations and let $\X_{(\Leaf)}$ and $\y_{(\Leaf)}$ be the inputs/covariates and the observed outputs contained in $\data_{(\Leaf)}$.
Then the log marginal likelihood is given as
\begin{equation}
  \begin{aligned}
  \log&\, p(\y_{(\Leaf)} \cbar \X_{(\Leaf)}) \\&= -\frac{1}{2}\left( ({\y_{(\Leaf)}}^{T} C^{-1} \y_{(\Leaf)}) + \log|C| + N\log 2\pi \right) \, , \label{eq:mllh}
  \end{aligned}
\end{equation}
where $C = k_{\X_{(\Leaf)},\X_{(\Leaf)}} + \sigma^2 \bm I$ and $\log|C|$ denotes the log determinant of $C$.
Consequently, because the DSMGP is a mixture of Gaussian measures, the log marginal likelihood is
\begin{equation}
 \begin{aligned}
 \log& \,p(\y \cbar \X)\\ &= \lse \left( \log \redub{p(\SPT_i)}_{\textrm{c.f.~Eq.~3}} + \sum_{\Leaf \in \SPT_{i,V}} \redub{\log p(\y_{(\Leaf)} \cbar \X_{(\Leaf)})}_{\textrm{Eq.~9}} \right) \, , \label{eq:spnmll}
\end{aligned}
\end{equation}
where $p(\SPT_i)$ is the probability of the $i^{\textrm{th}}$ induced tree and $\lse$ denotes the log-sum-exp operation.
Note that Eq.~\eqref{eq:spnmll} can be computed efficiently using a single upward-pass through the model.

To optimise the hyperparameters we perform gradient-based optimisation according to the partial derivatives of $\theta$, i.e.,
\begin{equation}
 \frac{\partial \log p(\y \cbar \X)}{\partial \theta} = \sum_{\Leaf \in \SPN} \nabla_{\Leaf} \frac{\partial \log p(\y_{(\Leaf)} \cbar \X_{(\Leaf)})}{\partial \theta} \, , \label{eq:grad}
\end{equation}
where $\nabla_{\Leaf} = \frac{p(\Leaf)p(\y_{(\Leaf}) \cbar \X_{(\Leaf)})}{p(\y \cbar \X)} \frac{\partial p(\y \cbar \X)}{\partial \Leaf}$ denotes the gradient for leaf $\Leaf$ and $p(\Leaf)$ is the probability of selecting $\Leaf$, c.f.~\cite{Poon2011}.
Note that $\nabla_{\Leaf}$ can be computed by applying the chain-rule.
We refer to \cite{Poon2011,Trapp2019b} for details on the gradient computation in SPNs.

In case of non-stationary data, we can optionally fine-tune the hyperparameters of each expert.
For this purpose, let $\#\Leaves$ denote the cardinality of $\Leaves$ and let $S \in \mathbb{R}^{\#\Leaves \times \#\Leaves}$ be a similarity matrix.
Further, let $S$ contain similarity values, i.e.~$0 \leq [S]_{i,j} \leq 1$ and $[S]_{i,i} = 1$, between all pairs of leaves $(\Leaf_i,\Leaf_j)$, with $\Leaf_i \in \Leaves, \Leaf_j \in \Leaves$.
A natural choice for $S$ is a matrix of normalised overlap values, i.e.~$[S]_{i,j} = \sfrac{\sum_{\x_n \in \data_{(\Leaf_i)}} \mathds{1}\{\x_n \in \data_{(\Leaf_j)} \}}{\# \data_{(\Leaf_i)}}$ where $\# \data_{(\Leaf_i)}$ is the cardinality of $\data_{(\Leaf_i)}$.

Given a similarity matrix $S$, we can compute the gradients for $\theta_{\Leaf_i}$ of leaf $\Leaf_i$ as
\begin{equation}
   \begin{aligned}
 \frac{1}{\partial \theta_{\Leaf_i}}&\partial \log p(\y \cbar \X) \\ &= \sum_{\Leaf_j \in \SPN} S_{i,j} \nabla_{\Leaf_j} \frac{\partial \log p(\y_{(\Leaf_j)} \cbar \X_{(\Leaf_j)}, \theta_{\Leaf_j} = \theta_{\Leaf_i})}{\partial \theta_{\Leaf_j}} \, . \label{eq:localgrad}
 \end{aligned}
\end{equation}
Therefore, $S$ constraints hyperparameters of similar leaves to similar values.
Note that Eq.~\eqref{eq:localgrad} reduces to \eqref{eq:grad} if $S$ is a matrix of ones.

\begin{figure*}
  \centering
    \begin{subfigure}[b]{0.32\textwidth}
        \includegraphics[width=\linewidth]{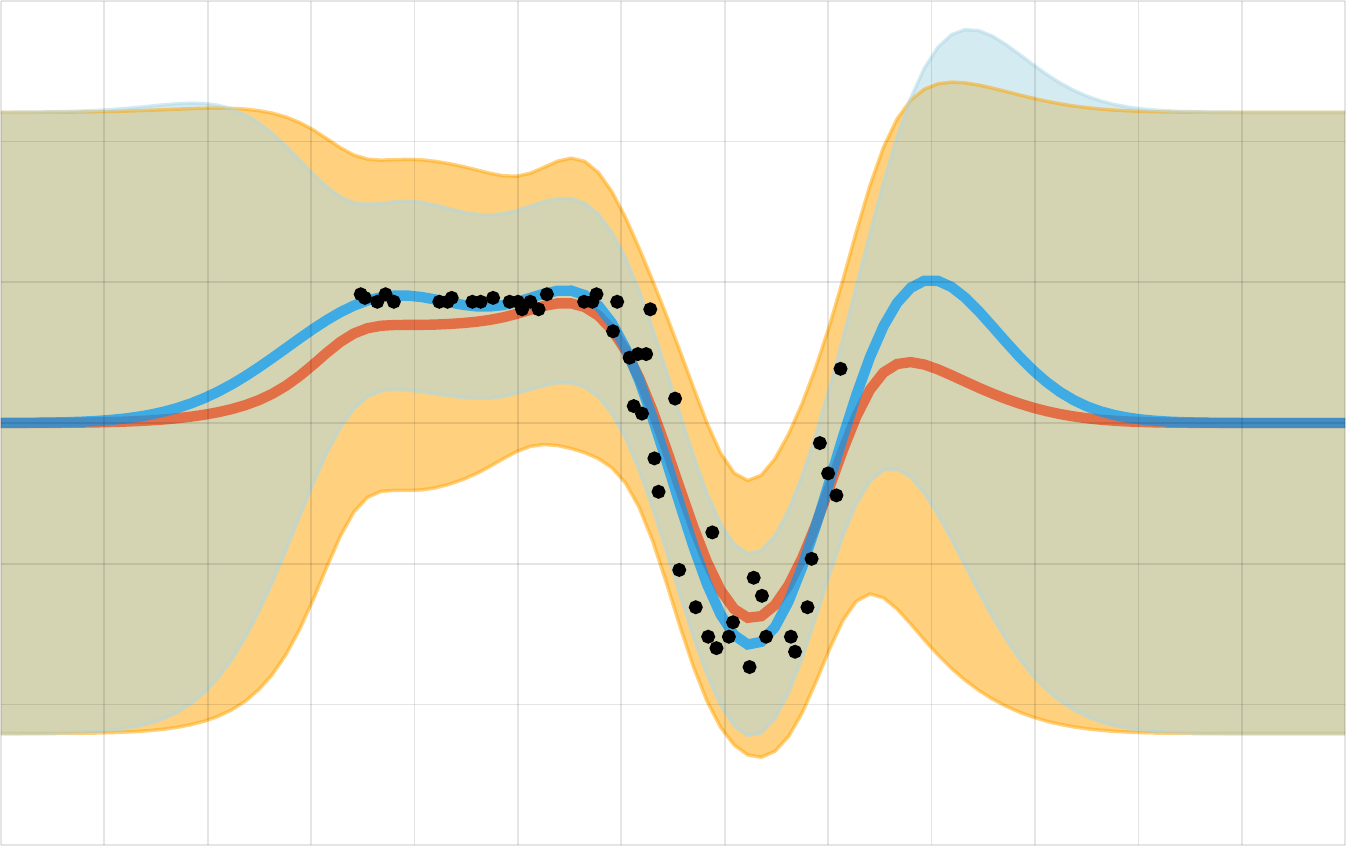}
      \caption{generalized PoE \label{fig:fig1}}
    \end{subfigure}
    \begin{subfigure}[b]{0.32\textwidth}
      \includegraphics[width=\linewidth]{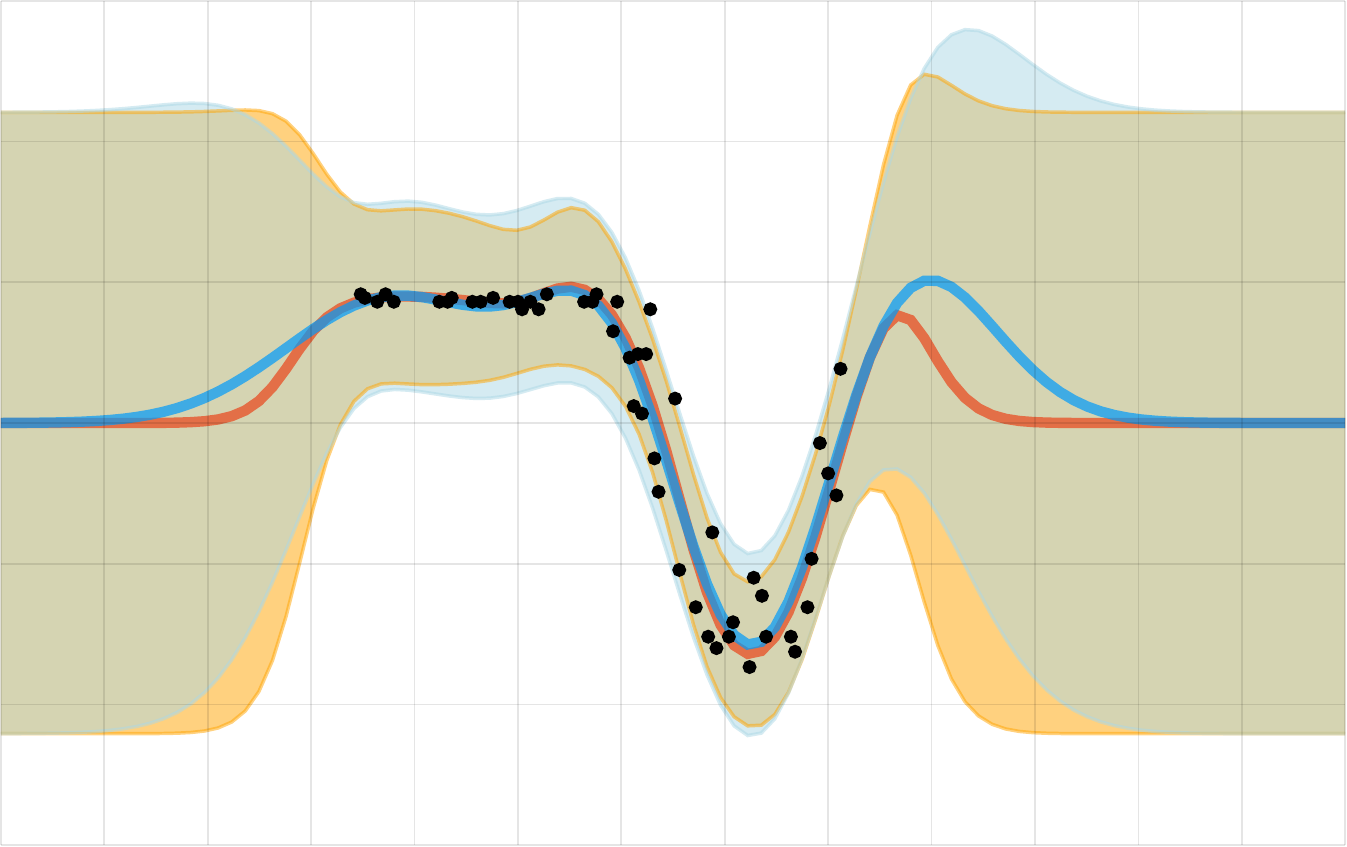}
      \caption{robust BCM \label{fig:fig2}}
    \end{subfigure}
    \begin{subfigure}[b]{0.32\textwidth}
        \includegraphics[width=\linewidth]{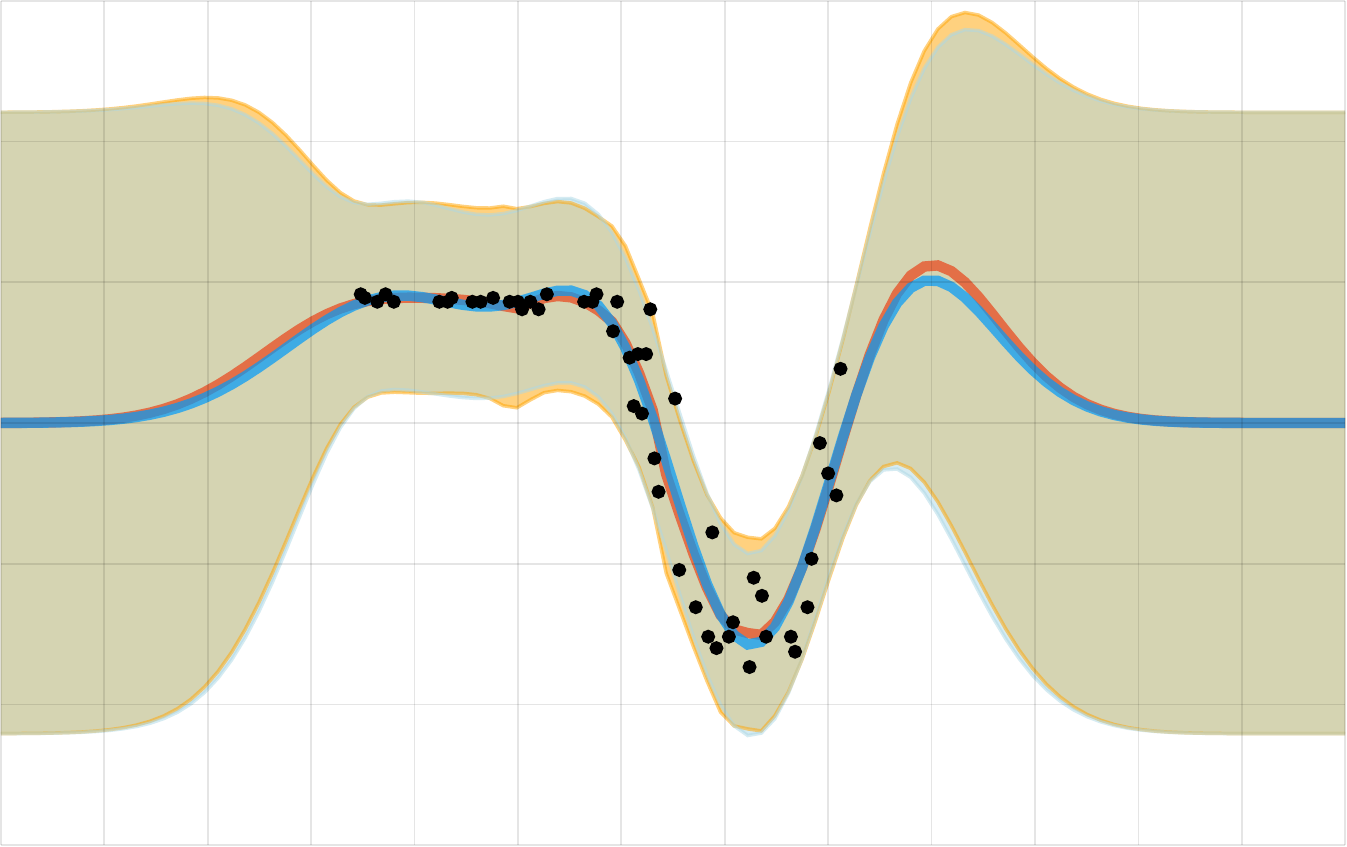}
      \caption{DSMGP (our work) \label{fig:fig3}}
    \end{subfigure}
    \caption{Comparison of generalized PoE, robust BCM and DSMGP (orange) against an exact GP (blue). \label{fig:distributedGPs}}
\end{figure*}

Figure~\ref{fig:nonstationary} illustrates the effects of fine-tuning on a synthetic dataset with heteroscedastic noise \cite{Tolvanen2014}.
In contrast to global hyperparameter optimisation (Eq.~\eqref{eq:grad}), fine-tuning allows to capture heteroscedasticity by obtaining an individual noise parameter for each leaf.

\section{EXPERIMENTS}   \label{sec:experiments}
To assess the performance of DSMGPs, we first compare the approximation error of our model against existing approaches in Section~\ref{sec:approxError}.
Subsequently, we evaluate the predictive performance of DSMGPs against state-of-the-art on various benchmark datasets in Section~\ref{sec:evaluation}.

To construct the DSMGP structure for each experiment, we used Algorithm~\ref{alg:structure} in the supplement.
In short, we construct a hierarchical structure consisting of sum nodes, with $K_{\SumNode}$ children and $w_{\SumNode,\Child} = \frac{1}{K_{\SumNode}}$, and product nodes, with $K_{\ProductNode}$ children, by alternating between sum and product nodes.
This process terminates and constructs a leaf node once we reached $R$ many repetitions -- consecutive sum and product nodes -- or the number of observations in the subspace is smaller than a pre-defined minimum $M$.
Finally, we equip each GP leaf with a Squared Exponential (SE) covariance function with Automatic Relevance Detection (ARD) and a zero mean-function.
Note that we use the same covariance- and mean-function for all other methods.
To obtain suitable hyperparameters, we perform global hyperparameter optimisation for each model using RMSprop (over $1k$ iterations) and in case of DSMGPs refrain from local fine-tuning in favour of a fair comparison.

\subsection{Approximation Error}\label{sec:approxError}
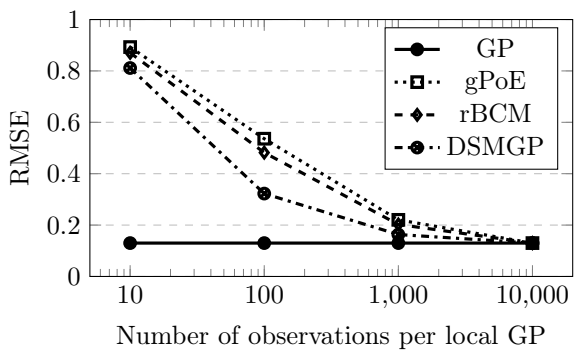
\begin{figure}[h]
\begin{tikzpicture}
\begin{axis}[
    width=8cm,
    height=5cm,
    xlabel={Number of observations per local GP},
    ylabel={RMSE},
    ymin=0, ymax=1,
    legend pos=north east,
    ymajorgrids=true,
    grid style=dashed,
    cycle list name=my black white,
    xmode=log,
    log ticks with fixed point,
    every axis plot/.append style={very thick}
]

\addplot coordinates {
    (10, 0.12994305612865528)
    (100, 0.12994305612865528)
    (1000, 0.12994305612865528)
    (10000, 0.12994305612865528)
    };
\addlegendentry{GP}

\addplot coordinates {
    (10, 0.892851846567771)
    (100, 0.5361519176606798)
    (1000, 0.22035232242380656)
    (10000, 0.12994305612865528)
    };
\addlegendentry{gPoE}

\addplot coordinates {
    (10, 0.8705761938114943)
    (100, 0.48205846162228727)
    (1000, 0.20236480791966707)
    (10000, 0.12994305612865528)
    };
\addlegendentry{rBCM}

\addplot coordinates {
    (10, 0.8113067424764158)
    (100, 0.32261215758802125)
    (1000, 0.16319490259966593)
    (10000, 0.12994305612865528)
    };
\addlegendentry{DSMGP}
\end{axis}
\end{tikzpicture}
\caption{Approximation error on Kin40k dataset. \label{fig:approx}}
\end{figure}
We use the motorcycle dataset~\cite{Silverman1985} to compare the approximation error of DSMGPs against popular expert-based approaches.
Figure~\ref{fig:distributedGPs} shows the posterior distribution a gPoE, a rBCM and our DSMGP overlain by the posterior of an exact GP.
All models use the same SE covariance-function as the exact GP and distribute the covariate space/data set onto local experts with $M=7$ observations.
We see that the gPoE and the rBCM algorithms result in over-conservative predictions and wrong estimates of the mean in regions without observations.
On the other hand, our model provides an accurate representation of the uncertainties and mean in regions with and without observed data, when used as an approximation to a GP.
Note that DSMGPs do not suffer from severe discontinuities and can exploit discontinuities in data when appropriate, e.g.~\cite{Cornford1998}.
We want emphasise that we selected the number of observations $M$ in favour of the gPoE and the rBCM as both degenerate with less observations.

Figure~\ref{fig:approx} quantitatively compares the approximation error on the Kin40k dataset \cite{Seeger2003}, in terms of the Root Mean Squared Error (RMSE).
Note that the DSMGP was constructed using $K_\SumNode=4$, $R=2$ and $K_\ProductNode = \sqrt[\scriptsize R]{\frac{N}{M}}$.
DSMGPs consistently obtain a lower approximation error than existing approaches, independently of the number of observations per expert.

\begin{table}
\caption{Average runtime (seconds) of an iteration of hyperparameter opt.~on an i7-6900k CPU~@~3.2~GHz.}\label{tab:timings}
\begin{center}
\begin{tabular}{lrrrr}
\textbf{Dataset} & \textbf{GP} &\textbf{gPoE} & \textbf{rBCM} & \textbf{Ours} \\
\hline \\
Airfoil & $0.28$ & $0.05$ & $0.05$ & $0.06$ \\
Parkin. & $42.61$ & $1.21$ & $1.30$ & $1.27$ \\
Kin40k & $107.65$ & $0.86$ & $0.87$ & $0.89$ \\
House & NA & $2.55$ & $2.55$ & $2.59$ \\
Protein & NA & $2.69$ & $2.70$ & $2.53$ \\
Year & NA & $28.82$ & $28.90$ & $22.17$ \\
\end{tabular}
\end{center}
\end{table}

\begin{table*}
\caption{Mean Absolute Error (MAE) and Negative Log Predictive Density (NLPD) of state-of-the-art approaches and DSMGPs (our work) on benchmark datasets with $1.5K$ to $500K$ observations. Smaller values are better.\label{tab:results}}
\centering
\begin{tabular}{ll|ccccc|ccc}
\multicolumn{2}{c|}{Dataset} & Const. & LR & GP & SVGP & KISS & gPoE & rBCM & \textbf{DSMGP} \\ 
\hline
 & & \multicolumn{1}{l}{} & \multicolumn{1}{l}{} & \multicolumn{1}{l}{} & \multicolumn{1}{l}{} &         & \multicolumn{1}{l}{} & \multicolumn{1}{l}{} & \multicolumn{1}{l}{}  \\
\multirow{2}{*}{\textbf{Airfoil} } & MAE  & $0.82$ & $0.53$ & $0.50$ & $\bm{0.32}$ & $0.51$  & $0.35$ & $0.34$ & \ul{$\bm{0.32}$} \\
 & NLPD & $1.43$ & $1.05$ & $0.99$ & $0.59$ & $1.00$  & $0.72$ & $3.21$ & \ul{$\bm{0.57}$} \\
\multirow{2}{*}{\textbf{Parkin.} } & MAE  & $0.85$ & $0.82$ & $0.78$ & $\bm{0.68}$ & $0.78$  & $0.84$ & $0.80$ &\ul{ $0.74$} \\
 & NLPD & $2.88$ & $2.79$ & $2.73$ & $\bm{2.52}$ & $2.73$  & $4.49$ & $3.80$ & \ul{$2.66$} \\
\multirow{2}{*}{\textbf{Kin40K} } & MAE  & $0.81$ & $0.81$ & $0.79$ & $\bm{0.25}$ & $0.79$  & $0.80$ & \ul{$0.43$} & $0.78$ \\
 & NLPD & $1.42$ & $1.42$ & $1.39$ & $\bm{0.37}$ & $1.39$  & $2.68$ & $4.14$ & \ul{$1.38$} \\
\multirow{2}{*}{\textbf{House} }   & MAE  & $0.62$ & $0.49$ & NA & $\bm{0.39}$ & $0.43$ & $0.50$ & $0.40$ &\ul{ $\bm{0.39}$} \\
 & NLPD & $1.45$ & $1.30$ & NA & $\bm{1.06}$ & $1.10$ & $4.61$ & $4.58$ & \ul{$1.11$} \\
\multirow{2}{*}{\textbf{Protein} } & MAE  & $0.89$ & $0.71$ & NA & $0.57$ & $0.64$ & $0.82$ & $0.70$ & $\bm{0.55}$ \\
 & NLPD & $1.41$ & $1.25$ & NA & $\bm{1.11}$ & $1.19$ & $2.38$ & $4.57$ & \ul{$\bm{1.11}$} \\
\multirow{2}{*}{\textbf{Year} } & MAE  & $0.74$ & $0.73$ & NA & $\bm{0.57}$ & NA & $0.74$ & $0.74$ & \ul{$0.72$} \\
 & NLPD & $1.41$ & $1.39$ & NA & $\bm{1.21}$ & NA & $3.78$ & $1.49$ & \ul{$1.38$} \\
\multirow{2}{*}{\textbf{Flight} }  & MAE  & $0.56$ & $\bm{0.54}$ & NA & $\bm{0.54}$ & NA & $0.56$ & $0.56$ & \ul{$\bm{0.54}$} \\
 & NLPD & $2.87$ & $2.85$ & NA & $\bm{2.80}$ & NA & $8.05$ & $11.51$ & \ul{$2.84$}
\end{tabular}
\end{table*}

\subsection{Quantitative Evaluation}\label{sec:evaluation}
To compare the performance of the DSMGP against state-of-the-art, we assess the predictive performance of an exact GP, linear regression (LR), constant regression (Conts.), gPoE, rBCM\footnote{\url{https://github.com/jopago/GPyBCM}}, sparse variational GPs (SVGPs)\footnote{\url{https://gpytorch.ai}}~\cite{Gal2014} and structured kernel interpolation (KISS)\footnotemark[3]~\cite{Wilson2015} on various benchmark dataset.
Statistics and details on the benchmark datasets are described in the supplementary.
The experiments use $Q=100$ inducing points and consistently use $M=100$ observations per expert for each expert-based approach and $K_\SumNode = 4$ for the DSMGP.
For the structured kernel interpolation (KISS) we chose the grid size according to the number of data points and used an additive kernel decomposition as KISS GPs scale exponentially with the dimensionality of the covariate space.
Note, that we obtained the hyperparameters for DSMGPs using surrogate DSMGP with $K_\SumNode = 1$.
The results for DSMGPs are likely to improve if hyperparameter optimisation is performed with $K_\SumNode = 4$.

Table~\ref{tab:results} reports the Mean Absolute Error (MAE) and the Negative Log Predictive Density (NLPD) on each dataset, see supplement for details on the pre-processing and an extended results table.
Note that NLPDs for $LR$ and $Const$ are computed using the inferred noise as the variance of the predictive distribution.
We see that DSMGPs consistently outperform other expert-based approaches and often perform competitive or outperform SVGPs\footnote{We reran the experiments due to errors in the software.}.
Further, our model consistently captures predictive uncertainties better than previous expert-based approaches resulting in low NLPDs.
Note that DSMGPs often have a lower approximation error, compared to exact GPs, then SVGPs.

To assess the effect of $K_\SumNode$ and $M$ on the performance of DSMGPs, we trained our model for different settings and computed the average NLPD on the test set for three datasets.
Figure~\ref{fig:contour} shows the respective results in form of contour plots.
We can see that low performance due to small $M$ can often be compensated by increased number of children per sum node.

\begin{figure*}
  \centering
    \begin{subfigure}[b]{0.32\textwidth}
        \includegraphics[width=\textwidth]{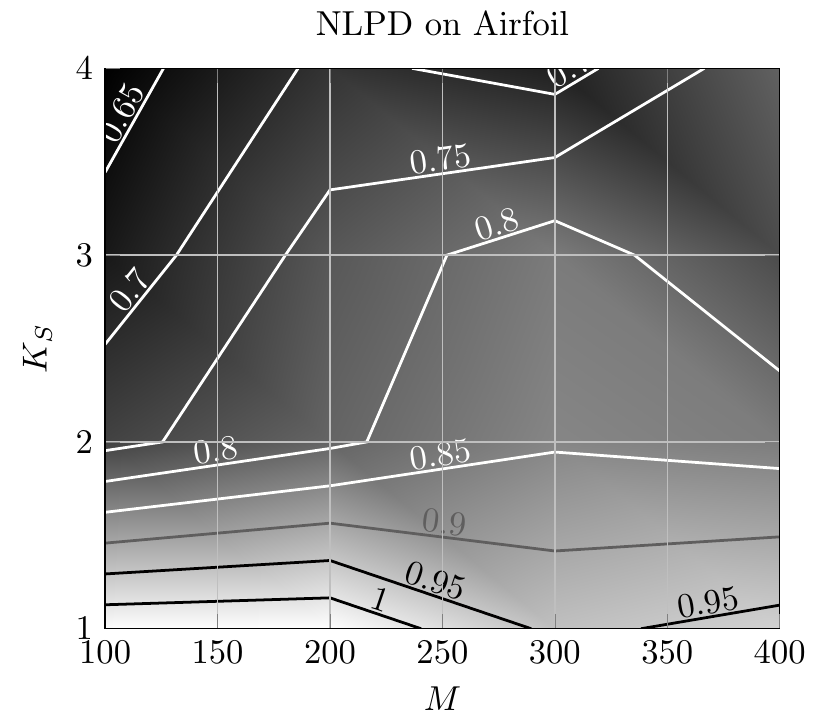}
    \end{subfigure}
    \begin{subfigure}[b]{0.32\textwidth}
        \includegraphics[width=\textwidth]{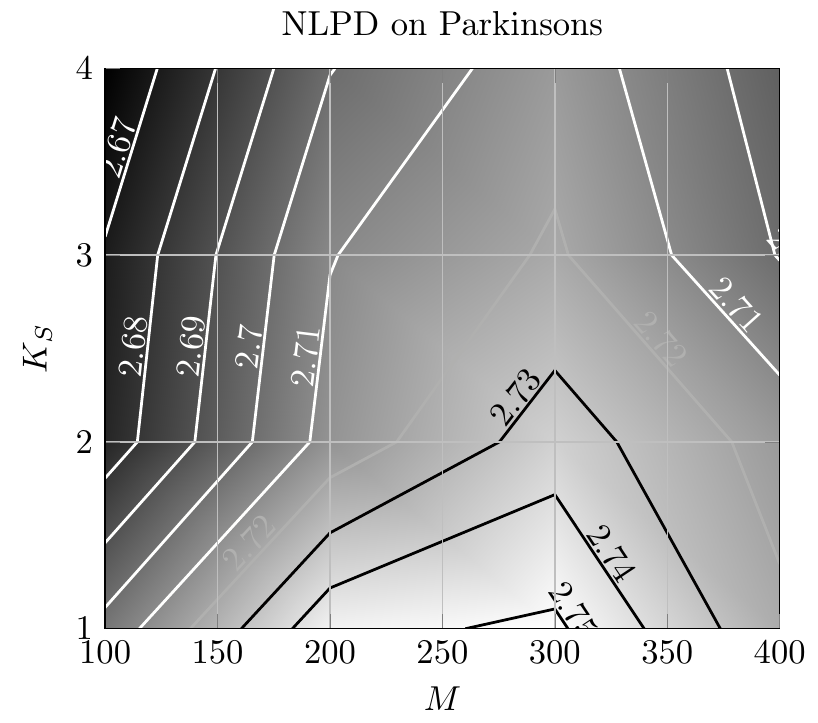}
    \end{subfigure}
    \begin{subfigure}[b]{0.32\textwidth}
        \includegraphics[width=\textwidth]{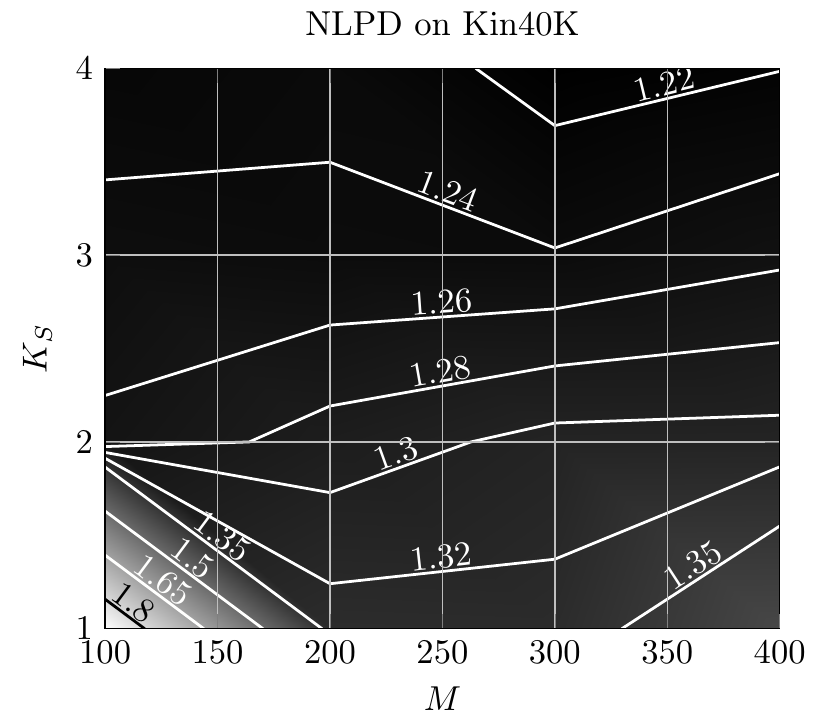}
    \end{subfigure}
    \caption{Avg. test NLPD scores for DSMGPs trained using different number of children under each sum node $K_\SumNode$ and different number of observations per expert $M$. Each test NLPD score has been computed based on 10 independent reruns. \label{fig:contour}}
\end{figure*}

Additionally, we computed the effective number of mixture components of the DSMGPs and measured the average runtime of a single hyperparameter optimisation step on an i7-6900k CPU @ 3.2 GHz.
The effective mixture sizes are: airfoil: $5.44 \times 10^2$,
parkinsons: $1.41 \times 10^3$,
kin40k: $6.71 \times 10^7$,
house: $1.68 \times 10^7$,
protein: $7.21 \times 10^{16}$, and
year: $4.30 \times 10^{18}$.
We want to emphasis that these mixtures are not explicitly constructed but rather implicitly encoded through the structure of the DSMGP.
Table~\ref{tab:timings} lists the resulting runtimes for hyperparameter optimisation, indicating that optimising DSMGPs is competitive to prior work when trained as described above.
These timings can be improved by implementing the mentioned algorithms using a distributed framework.

\subsection{Shared Cholesky Decomposition}
\begin{figure}[h]
\begin{tikzpicture}
\begin{axis}[
    width=8cm,
    height=4cm,
    xlabel={Number of partitions},
    ylabel={Time (seconds)},
    xmin=4, xmax=64,
    ymin=0, ymax=2.6,
    legend pos=north west,
    ymajorgrids=true,
    cycle list name=my black white,
    grid style=dashed,
    every axis plot/.append style={very thick}
]

\addplot coordinates {
    (4,0.1545907091)
    (9,0.3470741652999999)
    (16,0.6264584179)
    (25,0.9694871367000003)
    (36,1.4005208759000003)
    (49,1.9105875456999997)
    (64,2.506605621500001)
    };
\addlegendentry{Naive}

\addplot coordinates {
    (4,0.11114267129999995)
    (9,0.213489505)
    (16,0.3572568512999999)
    (25,0.5171078187000002)
    (36,0.7226801069000002)
    (49,0.9231654454999999)
    (64,1.1861763097000002)
    };
\addlegendentry{Shared}
\end{axis}
\end{tikzpicture}
\caption{Time required to solve the Cholesky decomposition of a DSMGP on a synthetic dataset using a naive approach or using our shared approach.\label{fig:eval}}
\end{figure}
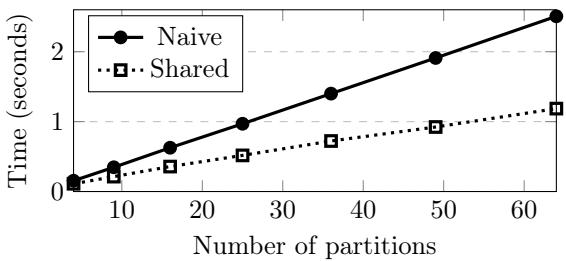
Finally, we empirically evaluated the performance gains through sharing solutions of the Cholesky decompositions comparing the runtime, measured on an i7-6900k CPU @ 3.2 GHz for a synthetic dataset consisting of $1,000$ observations, against an increasing number of partitions.
Through exploitation of the structure of DSMGPs we gain a speed-up by a factor of two, allowing us to explore twice as many partitions of the input space.

\section{CONCLUSION}   \label{sec:conclusion}
In this paper, we have introduced Deep Structured Mixtures of GPs (DSMGPs), which combine Sum-Product Networks (SPNs) with Gaussian Processes (GPs) as sub-modules, i.e., leaf distributions.
For this, we first introduced a measure-theoretic perspective on DSMGPs, extending the standard definition of SPNs.
Subsequently, we showed that DSMGPs enable \emph{efficient and exact posterior inference} and have attractive computation costs for hyperparameter optimisation.
We discussed that DSMGPs can be understood to perform exact Bayesian model averaging over a large set of naive-local-experts (NLE) models and showed that the structure can be exploited to speed-up computations and model non-stationary data.

Finally, we showed, in a variety of experiments, that DSMGPs provide low approximation errors and capture predictive uncertainties consistently better than existing expert-based approximations.
Future directions include, more advanced structure learning techniques, advanced techniques to distributed the load of the individual experts, approaches to reduce the memory requirements and combinations with sparse variational GPs and deep GPs. 

\subsubsection*{Acknowledgments}
We want to thank Mark van der Wilk for the insightful discussion.
This work was funded by the Austrian Science Fund (FWF): I2706-N31 and received funding from the European Union’s Horizon 2020 research and innovation programme under the Marie Sk\l odowska-Curie Grant Agreement No. 797223 -- HYBSPN.

\bibliography{references}

\appendix

\section{Shared Cholesky Decomposition}\label{sec:sharing}
We naturally have overlapping local GPs in DSMGPs and, therefore, experts at the leaves share parts of their kernel matrix.
This property can be utilised to share solutions of the Cholesky decompositions, which speeds up computations.
Therefore, let us consider the case in which two leaves, denoted as $\Leaf_i$ and $\Leaf_j$, are such that $\mathcal{X}_{\Leaf_j}$ is contained in $\mathcal{X}_{\Leaf_i}$.
Further, let us consider the scenarios for which the number of observation in $\mathcal{X}_{\Leaf_j}$ is less than in $\mathcal{X}_{\Leaf_i}$, i.e. $\#\X_{(j)} < \#\X_{(i)}$ where $\X_{(i)}$ is shorthand for $\X_{(\Leaf_i)}$.

In the first scenario the kernel matrix $k_{\X_{(j)},\X_{(j)}}$ of $\Leaf_j$ is a submatrix of the kernel matrix of $\Leaf_i$ and $[k_{\X_{(j)},\X_{(j)}}]_{1,1} = [k_{\X_{(i)},\X_{(i)}}]_{1,1}$.
Therefore, the lower-triangular matrix of the Cholesky decomposition for the kernel matrix of $\Leaf_j$ is a submatrix of the decomposition for the kernel matrix of $\Leaf_i$.
Let $L_{\Leaf_i}$ and $L_{\Leaf_j}$ denote the lower-triangular matrix of the Cholesky decomposition for the respective kernel matrices.
Then,
\begin{equation}
 L_{\Leaf_i} = \begin{bmatrix}
 L_{\Leaf_j} & v^T \\
 v & \tilde{L}
 \end{bmatrix} \, ,
\end{equation}
where the vector $v \in \mathbb{R}^{P}$ and $\tilde{L} \in \mathbb{R}^{P \times P}$ with $P$ being the additional dimensions contained in $L_{\Leaf_i}$.
Thus, we can copy the respective sub-matrix to obtain $L_{\Leaf_j}$.

In the second scenario $[k_{\X_{(j)},\X_{(j)}}]_{1,1} \neq [k_{\X_{(i)},\X_{(i)}}]_{1,1}$ but both kernel matrices share the last column/row.
This scenario can be solved efficiently using rank-1 updates.
Therefore, let $L_{\Leaf_i}$ be defined as
\begin{eqnarray}
 L_{\Leaf_i} & =
 \begin{bmatrix}
 l_{1,1} & \bm{0} \\
 \bm{l}_{2:N,1} & L_{2:N,2:N}
 \end{bmatrix} \, ,
\end{eqnarray}
and let us assume that the kernel matrix of $\Leaf_j$ contains all observations the kernel matrix of $\Leaf_i$ contains, except the first one, i.e. the first index.
We now aim to obtain $L_{\Leaf_j}$ without solving the Cholesky decomposition explicitly.
For this purpose, let $A$ be
\begin{eqnarray}
 A &= \begin{bmatrix}
 0 & \bm{0}\\
 \bm{0} & \tilde{L}_{2:N,2:N}
 \end{bmatrix} \, ,
\end{eqnarray}
and let $\tilde{L}_{2:N,2:N} = L_{\Leaf_j}$ be the sub-matrix of interest.
Using a rank-1 update with $\bm{l}_{2:N,1}$, i.e.
\begin{equation} \label{eq:linearSystem}
 \tilde{L}_{2:N,2:N} = L_{2:N,2:N} + \bm{l}_{2:N,1} \bm{l}_{2:N,1}^T \, ,
\end{equation}
we can efficiently obtain $L_{\Leaf_j}$ by solving Equation~\eqref{eq:linearSystem} and dropping the first column and row of $A$.
Note that in case of multiple missing observations, we can apply rank-1 updates on $A$ consecutively.
To perform rank-1 updates numerically stable we use the approach in \cite{Seeger2008}.
Note that other scenarios are either a combination of the two discussed scenarios, or can be solved by continuing the Cholesky decomposition after applying rank-1 updates or have to be solved directly to obtain sufficiently stable results\footnote{We empirically evaluated the numerical errors for different scenarios and found that rank-1 downgrades do not result in numerically stable solutions.}.

\section{Datasets}
If available we used the existing training set / testing set splits and otherwise randomly split the dataset into 70\% for training and 30\% for testing.

We pre-processed each dataset to have zero mean and unit variance -- in the inputs and outputs -- and used a zero mean function for each approach.
Note that all of the datasets (without pre-processing) can be found on GitHub under \url{https://github.com/trappmartin/DeepStructuredMixtures/releases/download/v0.1/datasets.tar.gz}.

\begin{table}[h]
\caption{Statistics of benchmark datasets. For each dataset we list the number of training samples N (train), the number of test samples N (test), the number of input dimensions (D) and the number of output dimensions (P).} \label{tab:stats}
\begin{center}
\begin{tabular}{lrrrr}
\textbf{Dataset} & \textbf{N (train)} & \textbf{N (test)} & \textbf{D} & \textbf{P} \\
\hline \\
Airfoil & 1,052 & 451 & 5 & 1 \\
Parkin. & 4,112 & 1,763 & 16 & 2 \\
Kin40k & 10,000 & 30,000 & 8 & 1 \\
House & 15,949 & 6,835 & 16 & 1 \\
Protein & 32,011 & 13,719 & 9 & 1 \\
Year & 360,742 & 154,603 & 90 & 1 \\
Flight & 500,000 & 200,000 & 8 & 2 
\end{tabular}
\end{center}
\end{table}

\section{Scores}
To assess the performance we computed the root mean squared error (RMSE), the mean absolute error (MAE) and the negative log predictive density (NLPD), i.e.
\begin{align}
\text{RMSE} &= \sqrt{\frac{1}{N} \sum_{n=1}^N (\hat{y}_n - y_n)^2} \; ,\\
 \text{MAE} &= \frac{1}{N} \sum_{n=1}^N |\hat{y}_n - y_n| \; ,\\
 \text{NLPD} &= -\log p(y_n \cbar \data, \x_n, \theta) \, ,
\end{align}
where $\hat{y}_n$ is the prediction for test datum $n$ and $\data$ is the training set.

\section{Algorithms}
We applied the structure construction algorithm described in Section 5.1 in the paper to automatically build hierarchical structures.
In the following text, we will explain the algorithms for structure construction, posterior inference in pseudo-code.

\subsection{Structure Construction}
The Algorithm~\ref{alg:structure} recursively creates a tree structured DSMGP containing sum nodes with $K_\SumNode$ many children and product nodes with $K_\ProductNode$ many children.
The argument $\min N$ controls the minimum number of observations per GP expert. Note that in the Julia implementation provided on GitHub, we additionally control for the number of recursions, that is the number of consecutive sum and product nodes.
Note that \texttt{$\Node$ isa $\SumNode$} denotes a check if $\Node$ is a $\SumNode$ or not and leverage the Julia syntax of using an exclamation mark to denote an in-place operation, e.g. \texttt{push!($Q, \Node$)} adds $\Node$ into $Q$.

\begin{algorithm}
\caption{structure construction algorithm}\label{alg:structure}
\begin{algorithmic}[1]
    \Procedure{learnDSMGP}{$K_{\SumNode},K_{\ProductNode},\min N$}
\State $\SPN \gets \SumNode$
\State $Q \gets$ empty queue
\State push!($Q, \SPN$)
\While{$Q \neq \emptyset$}
\State $\Node \gets$ pop!($Q$)
\If{$\Node$ isa $\SumNode$}
\For{$k = 1, \dots, K_{\SumNode}$}
\State $\ch(\Node)[k] \gets \ProductNode$
\State $w_{\Node,\ch(\Node)[k]} = \frac{1}{K_{\SumNode}}$
\State push!($Q, \ch(\Node)[k]$)
\EndFor
\ElsIf{$\Node$ isa $\ProductNode$}
\State $\data^{(\Node)} \gets \{\X^{(\Node)}, \y^{(\Node)}\}$
\State $d \sim [$variance($\X^{(\Node)}_i$)$\, \forall i]$
\State $d_{\text{min}} \gets \min \X^{(\Node)}_d$
\State $d_{\text{med}} \gets \text{median}(\X^{(\Node)}_d)$
\State $v \gets \max \X^{(\Node)}_d - d_{\text{min}}$
\For{$k = 1, \dots, K_{\ProductNode}-1$}
\State $s_{k} \sim 0.5 [v\Beta(2,2) + d_{\text{min}}] + 0.5d_{\text{med}}$
\EndFor
\State sort!($s$)
\State $s_{\text{min}} \gets d_{\text{min}}$
\For{$k = 1, \dots, K_{\ProductNode}-1$}
\State $s_{\text{max}} \gets s[k]$
\State $\data^{(\Child)} \gets \data^{(\Node)}[s_{\text{min}}:s_{\text{max}}]$
\If{$\# \data^{(\Child)} > \min N$}
\State $\ch(\Node)[k] \gets \SumNode$
\State $\data^{\ch(\Node)[k]} \gets \data^{(\Child)}$
\State push!($Q, \ch(\Node)[k]$)
\State $s_{\min} \gets s[k]$
\Else
\State $\ch(\Node)[k] \gets \Leaf$
\State $\data^{\ch(\Node)[k]} \gets \data^{(\Child)}$
\EndIf
\EndFor
\State $s_{\text{max}} \gets \max \X^{(\Node)}$
\State $\data^{(\Child)} \gets \data^{(\Node)}[s_{\text{min}}:s_{\text{max}}]$
\If{$\# \data^{(\Child)} > \min N$}
\State $\ch(\Node)[k] \gets \SumNode$
\State $\data^{\ch(\Node)[k]} \gets \data^{(\Child)}$
\State push!($Q, \ch(\Node)[k]$)
\Else
\State $\ch(\Node)[k] \gets \Leaf$
\State $\data^{\ch(\Node)[k]} \gets \data^{(\Child)}$
\EndIf
\EndIf
\EndWhile
\EndProcedure
\end{algorithmic}
\end{algorithm}

\subsection{Exact Posterior Inference}
The following sub-section illustrates the implementation of exact posterior inference in DSMGPs.
The procedure shown in Algorithm~\ref{alg:inference} recursively performs exact posterior updates and is called using the root node of the DSMGP.
Note that for reasons of numerical stability, an actual implementation of the algorithm will need to perform the operations in log-space. 
Again, we refer to the accompanied Julia implementation for an efficient example implementation.

\begin{algorithm}
\caption{Exact posterior inference}\label{alg:inference}
\begin{algorithmic}[1]
\Procedure{exactInference}{$\Node$}
\State $z \gets 0$
\If{$\Node$ isa $\SumNode$}
    \For{$\Child \in \ch(\Node)$}
        \State $w_{\Node, \Child} \gets w_{\Node,\Child} *$ \Call{exactInference}{$\Child$}
        \State $z \gets z + w_{\Node,\Child}$
    \EndFor
    \For{$\Child \in \ch(\Node)$}
        \State $w_{\Node, \Child} \gets w_{\Node,\Child} / z$
    \EndFor
\ElsIf{$\Node$ isa $\ProductNode$}
    \For{$\Child \in \ch(\Node)$}
        \State $z \gets z +$ \Call{exactInference}{$\Child$}
    \EndFor
\Else
\State $z \gets p_{\Node}(\bm y \cbar \bm X)$~\cite{Rasmussen2006}
\EndIf
\Return $z$
\EndProcedure
\end{algorithmic}
\end{algorithm}

\end{document}